\definecolor{porange}{HTML}{E77500} 
\algrenewcommand{\algorithmiccomment}[1]{\textcolor{porange}{\hfill// #1}}
\algnewcommand{\LineComment}[1]{\Statex \textbf{\textcolor{porange}{// #1}}}
\newcommand{\jaime}[1]{\ifthenelse{\boolean{include-notes}}{\textcolor{orange}{\textbf{Jaime:} #1}}{}}
\newcommand{\haimin}[1]{\ifthenelse{\boolean{include-notes}}{\textcolor{teal}{\textbf{Haimin:} #1}}{}}
\newcommand{\remove}[1]{\ifthenelse{\boolean{include-remove}}{\textcolor{red}{\sout{#1}}}{}}
\newcommand{\new}[1]{\ifthenelse{\boolean{include-new}}{\textcolor{magenta}{#1}}{#1}}
\newcommand{\reals}{\mathbb{R}}
\newcommand{\distr}{p}
\newcommand{\prob}{P}
\newcommand{\mean}{\mu}
\newcommand{\covar}{\Sigma}
\newcommand{\entropy}{H}
\newcommand{\gaussian}{{\mathcal{N}}}
\newcommand{\ivec}{\mathcal{I}}
\newcommand{\bel}{b}
\DeclareMathOperator*{\expectation}{\mathbb{E}}
\DeclareMathOperator*{\argmax}{arg\,max}
\DeclareMathOperator*{\subjectto}{subject\,\, to \quad}
\newcommand{\state}{{x}}
\newcommand{\ctrl}{{u}}
\newcommand{\dstb}{{d}}
\newcommand{\cset}{{\mathcal{U}}}
\newcommand{\dyn}{{f}}
\newcommand{\valfunc}{{V}}
\newcommand{\qfunc}{{Q}}
\newcommand{\policy}{{\pi}}
\newcommand{\failure}{{\mathcal{F}}}
\renewcommand\algorithmicfunction{\textbf{function}}
\renewcommand\Function{\item[ \algorithmicfunction]}
\newcommand{\mset}{{\mathcal{M}}}
\newcommand{\nodeset}{\mathcal{N}}
\newcommand{\leafset}{\mathcal{L}}
\newcommand{\cnodeset}{\mathcal{C}}
\newcommand{\node}{n}
\newcommand{\tnode}{{\tilde{\node}}}
\newcommand{\pre}[1]{\operatorname{\mathfrak{p}}(#1)}
\newcommand{\cl}{\text{cl}}
\newcommand{\Sim}{\text{sim}}
\begin{document}
\mainmatter              
\title{Active Uncertainty Reduction for Human-Robot Interaction: An Implicit Dual Control Approach}
\titlerunning{Active Uncertainty Reduction for HRI}  
%
\author{Haimin Hu\inst{1} \and Jaime F. Fisac\inst{1}}
\authorrunning{Haimin Hu et al.} 
%
\tocauthor{Haimin Hu, and Jaime F. Fisac}
\institute{Department of Electrical and Computer Engineering, Princeton University, USA,\\
\email{\{haiminh,jfisac\}@princeton.edu}}

\maketitle              

\begin{abstract}
\new{The ability to accurately predict human behavior is central to the safety and efficiency of robot autonomy in interactive settings.}
\new{Unfortunately, robots often lack access to key information on which these predictions may hinge, such as people's goals, attention, and willingness to cooperate.}
Dual control theory addresses this challenge by treating unknown parameters \new{of a predictive model} as stochastic hidden states and \remove{identifying}\new{inferring} their values \new{at runtime} using information gathered during \remove{control of the robot}\new{system operation}.
\remove{Despite its ability}\new{While able} to optimally and automatically trade off exploration and exploitation, dual control is computationally intractable for general \remove{human-in-the-loop}\new{interactive} motion planning, mainly due to \remove{nested}\new{the fundamental coupling between robot} trajectory optimization and human intent \remove{prediction}\new{inference}.
In this paper, we present a novel algorithmic approach to enable active uncertainty reduction for \remove{human-in-the-loop}\new{interactive} motion planning based on the implicit dual control paradigm.
Our approach relies on sampling-based approximation of stochastic dynamic programming, leading to a model predictive control problem that can be readily solved by real-time gradient-based optimization methods.
The resulting policy is shown to preserve the dual control effect for \remove{generic}\new{a broad class of} predictive human models with both continuous and categorical uncertainty.
The efficacy of our approach is demonstrated with simulated driving examples.

\keywords{human-robot interaction, dual control theory, stochastic MPC}
\end{abstract}

\section{Introduction}
\remove{Human-in-the-loop robot planning}\new{Computing robot plans that account for possible interactions with one or multiple humans} \remove{can prove challenging}\new{is a challenging task}, as the robotic system and human agents may have coupled dynamics, limited communication capabilities, and conflicting interests.
To achieve safety and efficiency in human-robot interaction settings, the robot must competently predict and seamlessly adapt to human behavior.
\new{Intent-driven behavior models are widely used for such predictions: for example, the} Boltzmann model of noisily rational human \remove{behavior}\new{decision-making}~\cite{luce1959individual,ziebart2008maximum}\remove{is commonly used for human motion prediction, which} assumes that the human is exponentially likelier to \remove{pick}\new{take} actions \remove{that maximize an}\new{with a higher} underlying \new{\emph{utility}}\remove{state-action value function}.
If the human's intent is well captured by a given \remove{value}\new{utility} function, the interaction can be modeled as a dynamic game in which \remove{both }the human's and the robot's \remove{closed-loop}\new{feedback} strategies can be obtained via dynamic programming~\cite{fisac2019hierarchical}.
However, typical interaction settings may admit a plethora of \new{\emph{a priori}} plausible \remove{candidate value functions}\new{intents} (e.g., corresponding to \remove{different}\new{distinct} equilibrium solutions~\cite{peters2020inference} \new{or different human preferences~\cite{sadigh2018planning}}), \remove{whose parameters are not necessarily observable to}\new{which in general cannot be fully modeled, let alone observed, by} the robot\new{~\cite{fisacBHFWTD18}}.
The robot may seek to \new{represent the human's intent through a parametric model and then }infer \new{the value of} these \new{parameters} as hidden states under a Bayesian framework~\cite{fisacBHFWTD18,tian2021safety,hu2022sharp}, but doing so tractably while planning through interactions is an open problem.

\looseness=-1
Multi-stage trajectory optimization with closed-loop Bayesian inference can be generally cast as a stochastic optimal control problem.
An important aspect of stochastic control with hidden states is whether the computed policy generates the so-called \emph{dual control} effect~\cite{feldbaum1960dual,bar1974dual,mesbah2018stochastic}; that is, in the context of human-robot interaction, whether the robot \emph{actively} seeks to reduce the uncertainty about human's hidden states.
Solution methods for dual stochastic optimal control problems can be categorized into \emph{explicit} approaches~\cite{heirung2015mpc,sadigh2018planning,tian2021anytime}, which reformulate the problem with some form of
heuristic probing, and implicit approaches~\cite{bar1974dual,klenske2016dual,arcari2020dual}, which directly tackle the control problem with stochastic dynamic programming.
While explicit dual control problems are in general easier to formulate and solve than their implicit counterparts, designing the probing term and tuning its weighting factor can be non-trivial and may lead to inconsistent performance.
For a comprehensive review of dual control methods we refer to~\cite{mesbah2018stochastic}.

\vspace{0.2cm}
\looseness=-1
\noindent \textbf{Contribution:} In this paper, we formulate a broad class of \remove{human-in-the-loop}\new{interactive} planning problems in the framework of stochastic optimal control and present an approximate solution method using implicit dual stochastic model predictive control (SMPC).
The resulting policy automatically trades off the cost of exploration and exploitation, allowing the robot to actively reduce the uncertainty about the human's hidden states without sacrificing expected planning performance.
Our proposed SMPC problem supports both continuous and categorical human uncertainty and can be solved using off-the-shelf real-time nonlinear optimization solvers.
To \remove{the best of }our knowledge, this is the first \remove{human-in-the-loop}\new{human-robot} interactive \new{motion} planning framework \remove{with}\new{that performs} active uncertainty reduction \remove{that does not require designing}\new{without requiring} an explicit information gathering strategy or objective.

\section{Related Work}
\label{sec:related_work}

\noindent \textbf{Human-Robot Interaction as a POMDP.}
Robotic motion planning problems that involve identification of human intentions and behaviors can be modeled as a Mixed Observability Markov Decision Process~\cite{bandyopadhyay2013intention}, a variant of the well-known Partially Observable Markov Decision Process (POMDP).
Although intractable in the general form, efficient algorithms such as DESPOT~\cite{somani2013despot} and POMCP~\cite{silver2010monte} have been developed to approximately solve POMDPs.
In~\cite{sehr2017tractable}, a dual SMPC method is proposed to solve moderate-sized POMDPs.
The resulting policy naturally exhibits system probing and automatically trades off exploration and exploitation.
When interactions between the robot and the human are modeled, the POMDP formulation becomes a (usually intractable) Partially-Observable Stochastic Game (POSG)~\cite{sadigh2018planning}.
Our approach can be viewed as a new computationally efficient framework for solving human-robot interaction problems cast as POSGs.


\noindent \textbf{Stochastic Model Predictive Control.} 
SMPC has been widely used for robotic motion planning under uncertainty due to its ability to handle safety-critical constraints and general uncertainty models.
In~\cite{schildbach2015scenario}, an SMPC approach is proposed for lane change assistance of the ego vehicle in the presence of other human-driven vehicles.
In~\cite{chen2021interactive}, a scenario-based SMPC algorithm is proposed to capture multimodal reactive behaviors of uncontrolled human agents.
In~\cite{hu2022sharp}, a provably safe SMPC planner is developed for human-aware robot planning, which proactively balances expected performance with the risk of high-cost emergency safety maneuvers triggered by low-probability human behaviors.
However, all those SMPC methods do not produce dual control effect.
In~\cite{arcari2020dual}, an implicit dual SMPC is proposed for optimal control of nonlinear dynamical systems with both parametric and structural uncertainty.
Our paper builds on this approach to enable active uncertainty reduction for human-robot interaction.

\noindent \textbf{Active Information Gathering.} 
To date, most human-in-the-loop planning methods follow a ``passively adaptive'' paradigm.
In~\cite{peters2020inference}, human-robot interaction is modeled as a general-sum differential game with human's equilibrium uncertainty.
The robot first infers which equilibrium the humans are operating at and then aligns its own strategy with the inferred human's equilibrium solution.
Recently, the notion of active information gathering, which is conceptually very similar to dual control, has received attention from the human-robot interaction community.
In~\cite{sadigh2018planning}, an additional information gathering term is added to the robot's nominal objective in a trajectory optimization framework for online estimating unknown parameters of the human's state-action value function.
In fact, according to the categorization proposed by~\cite{mesbah2018stochastic}, the planning formulation in~\cite{sadigh2018planning} can be classified as an explicit dual control approach, which requires heuristic design of a probing mechanism and weighing the relative importance between optimizing the expected performance and reducing the uncertainty of human's unknown parameters.
On the contrary, we propose an implicit dual control approach, which automatically balances performance with uncertainty reduction, thus not requiring design of a heuristic information gathering mechanism.

\section{Preliminaries}
\label{sec:prelim}

\looseness=-1
\textbf{Human-Robot Joint System.}
We consider a class of discrete-time input-affine dynamics that capture the interaction between a human individual \emph{or group} and a robotic system,
\begin{equation}
\label{eq:joint_sys}
\state_{t+1} = \dyn (\state_t) + B^R(\state_t) \ctrl^R_t + B^H(\state_t) \ctrl^H_t + \dstb_t,
\end{equation}
where $\state_t \in \reals^n$ is the joint state vector, $\ctrl^R_t \in \cset^R \subseteq \reals^{m_R}$ and $\ctrl^H_t \in \cset^H \subseteq \reals^{m_H}$ are the control vectors of the robot and the human, respectively, $f: \reals^{n} \rightarrow \reals^{n}$ is a nonlinear function that describes the autonomous part of the dynamics, $B^R: \reals^n \rightarrow \reals^{n \times m_R}$ and $B^H: \reals^n \rightarrow \reals^{n \times m_H}$ are control input matrices that can depend on the state, and $\dstb_t$ is an additive uncertainty term representing external disturbance inputs (e.g. wind) and modeling error. 
For simplicity, we further assume that $\dstb_t \sim \gaussian(0, \covar_\dstb)$ is a zero-mean i.i.d. Gaussian random variable. 

%
%
\begin{figure}[!hbtp]
  \centering
  \includegraphics[width=0.9\columnwidth]{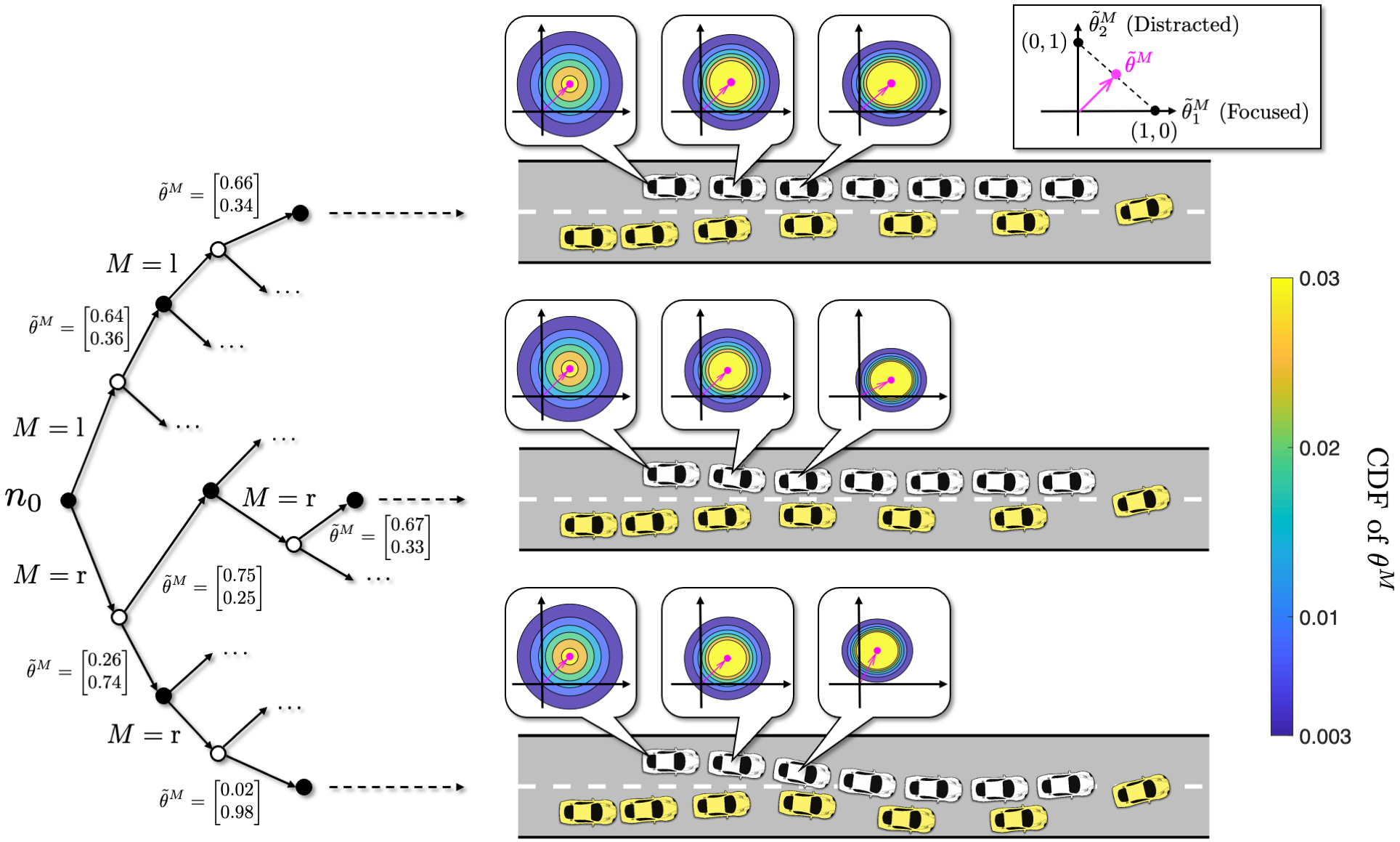}
  \caption{\label{fig:stree} Illustration of a scenario tree with
  $N^d = 2$ dual control time steps and $N^e = 4$ exploitation steps.
  The human-driven vehicle and autonomous car are plotted in white and yellow, respectively.
  The hidden state $\theta^M$ is modeled as a 2D Gaussian random variable, which is then projected onto a $1$-simplex by a softmax operation.
  The contour plots display level sets of the cumulative distribution function (CDF) of~$\theta^{M}$.
  Uncertainty is \emph{less} significantly reduced in the case where the human prefers the left lane (upper branch), since their behavior is less influenced by the robot's (probing) actions.
  \vspace{-3mm}
  }
\end{figure}
\noindent \textbf{Human Action Parameterization.}
In this paper, we parametrize the human's action at each time $t$ as a stochastic policy:
\begin{equation}
\label{eq:human_ctrl_model}
\textstyle\ctrl^H_t := \sum_{i=1}^{n_\theta} \theta_i^M  \ctrl_i^M, \quad \subjectto \ctrl^H_t \in \cset^H,
\end{equation}
which is a linear combination of \emph{stochastic basis policies} $\ctrl^M_i$ with parameter $\theta^M := (\theta^M_1,\theta^M_2,\ldots,\theta^M_{n_\theta}) \in \reals^{n_\theta}$.
We further allow each basis policy $\ctrl^M_i$ to have different \emph{modes} $M$ that take values from a \emph{finite} set $\mset$.
We define each basis policy $\ctrl^M_i$ with the ``noisily rational'' Boltzmann model from cognitive science~\cite{luce1959individual}.
Under this model, the human picks actions according to a probability distribution:
\begin{equation}
\label{eq:Boltzmann}
    \ctrl^M_i \sim 
    \distr\left(\ctrl^M_i \mid \state, \ctrl^R; M\right)=
    \frac{e^{-\qfunc_{i}^{M}\left(\state, \ctrl^R, \ctrl^M_i\right)}}{\int_{\tilde{\ctrl}^M_i \in {\cset}^{H}} e^{- \qfunc_{i}^{M}\left(\state, \ctrl^R, \tilde{\ctrl}^M_i\right)} d\tilde{\ctrl}^M_i },
\end{equation}
for all $i=1,2,\ldots,n_\theta$ and $M \in \mset$.
Here, $\qfunc_{i}^M(\state, \ctrl^R, \ctrl^M_i)$ is the human's state-action value (or Q-value) function associated with the $i$-th basis function and mode $M$.
This model assumes that, for a pair of fixed $(i,M)$, the human is exponentially likelier to pick an action that maximizes the state-action value function.

\begin{remark}
Our approach is agnostic to the concrete methods for determining the human's state-action value function $\qfunc_{i}^M(\state, \ctrl^R, \ctrl^M_i)$, model parameter $\theta^M$ and mode $M$, which are usually specified by the system designer based on domain knowledge or learned from prior data.
Goal-driven models of human motion are well-established in the literature. See for example~\cite{sadigh2018planning,ziebart2008maximum}.
We provide two examples below. 
\end{remark}

\begin{example}
\label{example:1}
Consider the highway driving scenario depicted in Fig.~\ref{fig:stree}, involving an autonomous car (R)
and a human-driven vehicle (H), whose action is parameterized as
\begin{equation*}
    u^H_t = \theta^M_D \policy^M_D \left(\state_t, u^R_t\right) + \theta^M_F \policy^M_F \left(\state_t, u^R_t\right), \quad \subjectto \ctrl^H_t \in \cset^H,
\end{equation*}
where $\theta^M_D$ and $\theta^M_F$ capture the level of distraction and focus of the human, respectively.
A focused human accounts for the safety of the joint system (e.g. avoiding and making room for the robot when it attempts to merge in front of the human), while a distracted human does not.
Modeling the human's level of distraction as a \emph{continuum} is motivated by recent work~\cite{toghi2021cooperative}, which differs from ours in that they assume an additive structure in the human's reward (objective) function rather than their policy.
Nonetheless, per the maxim of ``all models are wrong, but some are useful''~\cite{box1979all}, we will show in Section~\ref{sec:sim} that our choice of human predictive model is useful for planning purposes and leads to a policy that outperforms the state-of-the-art baseline.
Further, the human has two distinct modes, namely preferring to drive in the left lane or in the right lane, i.e., $M \in \{\text{l}, \text{r}\}$. 
Given a joint state $\state_t$, for each $i$ and $M$ we compute the human's game-theoretic state-action value function using the iterative linear-quadratic game method~\cite{fridovich2020efficient}, which yields an approximate local feedback Nash equilibrium solution.
\end{example}

\begin{example}
\label{example:2}
In the second example, we use parameter $\theta^M := (\theta^M_C, \theta^M_{NC})$ , which captures the human's level of cooperativeness.
A cooperative (C) human also optimizes for the robot's objective while a non-cooperative (NC) human does not.
We define the discrete modes as different interactive behaviors of the human toward the robot, similar to~\cite{bandyopadhyay2013intention,tian2021safety}.
Specifically, the human's state-action value function is defined as
\begin{equation*}
    \qfunc^M_i= \left\{\begin{array}{l}
    \begin{aligned}
    &\qfunc^M_i\left(\state_t, \ctrl^{R,\text{Nash}}_t(\state_t), \ctrl^M_i\right), \quad && \text{if } M = \textnormal{N (Nash)},\\
    &\qfunc^M_i\left(\state_t, \ctrl^{R,\text{worst}}_t(\state_t), \ctrl^M_i\right), \quad && \text{if } M = \textnormal{p (protected)},\\
    &\qfunc^M_i\left(\state_t, \ctrl^{R,\text{best}}_t(\state_t), \ctrl^M_i\right), \quad && \text{if } M = \textnormal{w (wishful)},\\
    &\qfunc^M_i\left(\state^H_t, \ctrl^M_i\right), \quad && \text{if } M = \textnormal{o (oblivious)}.
    \end{aligned}
    \end{array}\right.
\end{equation*}
\end{example}
In the first three modes, the human assumes that the robot's control $\ctrl^R_t$ is a local feedback Nash equilibrium solution~\cite{fridovich2020efficient}, the worst-case one that minimizes $\qfunc^M_i$, and the best-case one that maximizes $\qfunc^M_i$, respectively; the last mode follows the same assumption as in~\cite{fisacBHFWTD18,hu2022sharp} that the human ignores the presence of other agents.

\noindent \textbf{Inferring Model Parameter and Mode.}
In general, parameter $\theta^M$ and mode $M$ in human action model \eqref{eq:human_ctrl_model} are \emph{hidden states} that are unknown to the robot.
Therefore, they can only be inferred from past observations.
To address this, we define the information vector $\ivec_t := \left[\state_{t}, \ctrl^R_{t-1}, \ivec_{t-1}\right]$ as the collection of all information that is \textit{causally observable} by the robot at time $t \geq 0$, with $\ivec_{0}=\left[\state_{0}\right]$.
We then define the \textit{belief state} ${\bel_t := \distr\left(\theta^M, M \mid \ivec_{t}\right)}$ as the joint distribution of $(\theta^M,M)$ conditioned on $\ivec_t$, and $\bel_0:=\distr\left(\theta^M, M \right)$ is a given prior distribution.
When the robot receives a new observation $\state_{t+1}\in \ivec_{t+1}$, the current belief state $\bel_t$ is updated using the recursive Bayesian inference:
\begin{subequations}
\begin{align}
&\distr(\theta^M_{-} \mid \ivec_{t+1}; M) = \frac{ \distr(\state_{t+1} \mid \ctrl^R_t, \ivec_t; \theta^M, M) \distr(\theta^M \mid \ivec_{t}; M) }{ \distr(\state_{t+1} \mid \ctrl^R_t, \ivec_t; M)} \label{eq:Bayes_est_meas_theta}, \\
&\distr(M_{-} \mid \ivec_{t+1}) = \frac{\distr(\state_{t+1} \mid \ctrl^R_t, \ivec_t; M) \distr(M \mid \ivec_{t}) }{ \distr(\state_{t+1} \mid \ctrl^R_t, \ivec_t)} \label{eq:Bayes_est_meas_M}, \\
&\bel^-_{t+1} := 
\distr(\theta^{M}_{-}, M_{-} \mid \ivec_{t+1}) = \distr(\theta^{M}_{-} \mid \ivec_{t+1}; M) \distr(M_{-} \mid \ivec_{t+1}), \\
&\bel_{t+1} = g^t(\bel^-_{t+1}) := \textstyle \int  \distr(\theta^M, M \mid \tilde{\theta}^M_{-}, \tilde{M}_{-}) \distr(\tilde{\theta}^M_{-}, \tilde{M}_{-} \mid \ivec_{t+1}) d \tilde{\theta}^M_{-} d \tilde{M}_{-}. \label{eq:Bayes_est_time} 
\end{align}
\end{subequations}
where $\distr(\theta^M, M \mid \tilde{\theta}^M_{-}, \tilde{M}_{-})$ is a transition model and $g^t(\cdot)$ is the belief state transition dynamics.
Compactly, we can rewrite~\eqref{eq:Bayes_est_meas_theta}-\eqref{eq:Bayes_est_time} as a dynamical system,
\begin{equation}
\label{eq:belief_state_dyn}
{\bel}_{t+1}=g({\bel}_{t}, \state_{t+1}, \ctrl^R_{t} ).
\end{equation}
Unfortunately, system \eqref{eq:belief_state_dyn} in general does not adopt an analytical form beyond one-step evolution.
Even if the prior distribution $\bel_t$ is a Gaussian, the posterior $\bel_{t+1}$ ceases to be a Gaussian since the human's action $\ctrl^H_t$ defined by \eqref{eq:human_ctrl_model} and \eqref{eq:Boltzmann} (which in turn affects the observation $\state_{t+1}$) is generally non-Gaussian, thus precluding the use of the conjugate-prior properties of Gaussian distributions.
In Section~\ref{sec:main}, we will introduce a computationally efficient method to propagate the belief state dynamics approximately.

\section{Problem Statement}

\noindent \textbf{Canonical Human-in-the-Loop Planning Problem.}
We now define the central problem we want to solve in this paper: the canonical human-in-the-loop planning problem, which is formulated as a stochastic finite-horizon optimal control problem as follows:
\begin{subequations}
\label{eq:HRI}
\begin{align}
\label{eq:HRI:obj} \min_{\substack{\policy^R_{[0:N-1]}}} \
&\expectation\limits_{\substack{ (\theta^M, M) \sim \bel_{[0:N-1]}, \\ \ctrl_{[0: N-1]}^{H}\sim \eqref{eq:human_ctrl_model}, \dstb_{[0:N-1]} } } 
\sum_{k=0}^{N-1} \ell^R \left(\state_k, \policy^R_k(\state_k,\bel_k)\right) + \ell_F^R (\state_N) \\
\text{s.t.} \quad \label{eq:HRI:sys_init} &\state_{0}=\hat{\state}_{t}, \ \bel_{0}=\hat{\bel}_{t}, \\
\label{eq:HRI:sys_dyn} &\state_{k+1} = \dyn (\state_k) + B^R \policy^R_k(\state_k, \bel_k) + B^H \ctrl^H_k + \dstb_k, &&\hspace{-1.0cm} \forall k = 0,\ldots,N-1 \\
\label{eq:HRI:belief_dyn} &{\bel}_{k+1}=g\left({\bel}_{k}, \state_{k+1}, \policy^R_{k}(\state_k, \bel_k)\right),  &&\hspace{-1.0cm} \forall k = 0,\ldots,N-1 \\
\label{eq:HRI:safety} &\state_k \notin \failure,  &&\hspace{-1.0cm} \forall k = 0,\ldots,N
\end{align}
\end{subequations}
where $\hat{\state}_{t}$ and $\hat{\bel}_{t}$ are the state measured and belief state maintained at time $t$, $\ell^R: \reals^{n} \times \cset^R \rightarrow \reals_{\geq 0}$ and  $\ell^R_F: \reals^{n} \rightarrow \reals_{\geq 0}$ are designer-specified stage and terminal cost function, $\policy_k(\state_k, \bel_k)$ is a \emph{causal} feedback policy~\cite{bar1974dual,mesbah2018stochastic} that leverages the (yet-to-be-acquired) knowledge of future state $\state_k$ and belief state $\bel_k$, and $\failure \subseteq \reals^n$ is a failure set that the state is not allowed to enter.

In theory, problem \eqref{eq:HRI} can be solved using stochastic dynamic programming~\cite{bellman1966dynamic}.
An optimal robot's value function $\valfunc_k(\state_k, \bel_k)$ and control policy $\policy_k^{R,*}(\state_k, \bel_k)$ can be obtained backwards in time using the Bellman recursion,
\begin{equation}
\label{eq:sto_DP}
\begin{aligned}
\valfunc_k(\state_k, \bel_k) = &\min_{\substack{
\policy_k(\state_k,\bel_k)}} \ell^R(\state_k, \ctrl^R_k) + \expectation\limits_{\substack{ (\theta^M, M) \sim \bel_k, \\ \ctrl_k^{H}\sim \eqref{eq:human_ctrl_model}, \dstb_k } } \left[\valfunc_{k+1}(\state_{k+1}, \bel_{k+1}) \mid \ivec_k \right] \\
& \ \quad \text{s.t.} \ \ \eqref{eq:HRI:sys_dyn}-\eqref{eq:HRI:safety}
\end{aligned}
\end{equation}
with terminal condition $\valfunc_N(\state_N,\bel_N) = \ell^R_F(\state_N)$.

\noindent \textbf{Dual Control Effect.}
Value function $\valfunc_t(\state_t, \bel_t)$ obtained by solving \eqref{eq:sto_DP} depends on future belief states $\bel_k\; (k>t)$, thus giving the optimal policy $\ctrl^{R,*}_t :=  \policy^R_0(\hat{\state}_t, \hat{\bel}_t)$ the ability to affect \emph{future uncertainty} of the human quantified by the belief states.
Therefore, the optimal policy $\ctrl^{R,*}_t$ of \eqref{eq:sto_DP} possesses the property of \emph{dual control effect}, defined formally in Definition~\ref{def:DC}.
Due to the principle of optimality~\cite{bellman1966dynamic}, the policy achieves an optimal balance between optimizing the robot's expected performance objective~\eqref{eq:HRI:obj} and actively reducing \new{its uncertainty about the human}.
In other words, the optimal policy of~\eqref{eq:sto_DP} automatically probes the human agents to reduce their associated uncertainty \emph{only} to the extent that doing so improves the robot's \new{expected} closed-loop performance.

\begin{definition}[Dual Control Effect~\cite{feldbaum1960dual,bar1974dual,mesbah2018stochastic,hijab1984entropy}]
\label{def:DC}
A control input has dual control effect if it can affect, with nonzero probability, (a) at least one $r$th-order ($r \geq 2$) central moment of a hidden state variable in a metric space,
or (b) the entropy of a categorical hidden state variable.
\end{definition}

\noindent \textbf{Complication and Approximation of Dual Control.}
Unfortunately, \eqref{eq:sto_DP} is computationally intractable in all but the simplest cases, mainly due to nested optimization of robot's action and and computing the conditional expectation.
The expectation term in \eqref{eq:sto_DP} can be approximated to arbitrary accuracy with quantization of the belief states, which, however, leads to exponential growth in computation, i.e. the issue of \emph{curse of dimensionality}~\cite{bellman1966dynamic}.
It is for those reasons that approximate methods are mainly used to solve dual control problems.
Approximate dual control can be categorized into: explicit approaches, e.g.~\cite{heirung2015mpc,sadigh2018planning,tian2021anytime} that simplify the original stochastic optimal control problem by artificially introducing probing effect or information gathering objectives to the control policy, and implicit approaches, e.g.~\cite{bar1974dual,klenske2016dual,arcari2020dual} that rely on direct approximation of the Bellman recursion~\eqref{eq:sto_DP}.
The approach we take in this paper is a \emph{scenario-based implicit dual control method}, which is detailed in the next section.
The main advantage of using the implicit dual control approximation is that the automatic exploration-exploitation trade-off of the policy is naturally preserved in an optimal sense~\cite{sehr2017tractable,mesbah2018stochastic}.

\section{Active Uncertainty Reduction for Human-Robot Interaction}
\label{sec:main}
In this section, we describe an implicit dual control approach towards approximately solving the canonical human-in-the-loop planning problem~\eqref{eq:HRI}.
We start by presenting two approximation schemes for propagating the belief state dynamics and computing the expectation in Bellman recursion~\eqref{eq:sto_DP}, which are necessary for reformulating~\eqref{eq:sto_DP} as a real-time solvable SMPC problem.
The formulation and properties of our proposed SMPC problem are shown towards the end of this section.
See Fig.~\ref{fig:stree} for an example solution of our proposed SMPC problem.

\noindent \textbf{Tractable Reformulation of Belief State Dynamics.}
We start by deriving a tractable recursive update rule for the belief state dynamics \eqref{eq:belief_state_dyn} by approximating the human's action model \eqref{eq:human_ctrl_model}, for a given $(M,\theta^M)$, as a Gaussian distribution.
The technical tool we rely on is the Laplace approximation~\cite{DraganS12}.
Precisely, the conditional probability distribution of each human's basis policy $\ctrl^M_i$ defined in \eqref{eq:Boltzmann} is approximated as:
\begin{equation}
\label{eq:Laplace}
    \ctrl^M_i \sim 
    \distr\left(\ctrl^M_i \mid \state_t, \ctrl^R_t; M\right)
    \approx \gaussian\left(\mu_i^M(\state_t, \ctrl^R_t), \covar_i^M(\state_t, \ctrl^R_t) \right),
\end{equation}
where the mean function is 
$\mu_i^M(\state_t, \ctrl^R_t) = \textstyle\argmax_{\ctrl_{i}^{M} \in \cset^H} \qfunc_{i}^{M}\left(\state_{t}, \ctrl^R_t, \ctrl_{i}^{M}\right)$
and the covariance function is 
$\covar_i^M(\state_t, \ctrl^R_t) = \textstyle \nabla^2_{\ctrl_i^M} \left. {\qfunc_{i}^{M}\left(\state_{t}, \ctrl^R_t, \ctrl_{i}^{M}\right)}^{-1}\right|_{\ctrl_i^M = \mu_i^M(\state_t, \ctrl^R_t)}.$
The intuition behind the above Laplace approximation scheme is that the Gaussian distribution obtained in \eqref{eq:Laplace} centers around the mode of the original basis policy distribution $\distr\left(\ctrl_{i}^{M} \mid \state_t, \ctrl^R_t,M\right)$, i.e. $\mu_i^M(\state_t, \ctrl^R_t)$, which corresponds to the \emph{perfectly rational} human action associated with $\theta_i^M$.
Hence, the overall human action distribution, conditioned on $\theta^M$ and $M$, is given by:
\begin{equation}
\label{eq:human_control_model_approx}
\textstyle\ctrl^M_t \sim  \gaussian\left(\sum_{i=1}^{n_\theta} \theta_i^M \mu_i^M(\state_t, \ctrl^R_t), \sum_{i=1}^{n_\theta} \left(\theta_i^M\right)^2 \covar_i^M(\state_t, \ctrl^R_t) \right).
\end{equation}
We \new{subsequently lift} the requirement
that $\ctrl^H_t \in \cset^H$ in order to keep $\ctrl^H_t$ normally distributed \new{\emph{during belief propagation}, and we use projected $\ctrl^H_t$ for state evolution.}
Now, we can rewrite the human-robot joint system \eqref{eq:joint_sys} as
\begin{equation}
\label{eq:joint_sys_approx}
    \state_{t+1} = B^H(\state_t) U^M(\state_t, \ctrl^R_t) \theta^M + \dyn (\state_t) + B^R(\state_t) \ctrl^R_t + \bar{\dstb}^M_t,
\end{equation}
where $U^M(\state_t, \ctrl^R_t) := \begin{bmatrix} \mu_1^M(\state_t, \ctrl^R_t) &\mu_2^M(\state_t, \ctrl^R_t) &\ldots &\mu_{n_\theta}^M(\state_t, \ctrl^R_t) \end{bmatrix}$ is a deterministic state- and input-dependent matrix,
and the combined disturbance term $\bar{\dstb}^M_t$ is a zero-mean Gaussian, whose covariance is given by
\begin{equation}
\label{eq:combined_dstb_covar}
\begin{aligned}
\covar^{\bar{\dstb}^M_t} (\state_t, \ctrl^R_t; \theta^M) 
:= \covar_\dstb + \textstyle\sum_{i=1}^{n_\theta} \left(\theta_i^M\right)^2 B^H(\state_t) \covar_i^M(\state_t, \ctrl^R_t) {B^H(\state_t)}^\top.
\end{aligned}
\end{equation}
Note that even if \eqref{eq:joint_sys_approx} is linear in $\theta^M$, dependence of covariance matrix $\covar^{\bar{\dstb}^M_t}$ on $\theta^M$, as shown in \eqref{eq:combined_dstb_covar}, still prohibits updating the belief states in closed-form.
To this end, we approximate covariance $\covar^{\bar{\dstb}^M_t}$ by fixing $\theta^M$ with some estimated value $\bar{\theta}^M$, which can be obtained, for example, from solutions of the last run or roll-out-based simulations.

Now, we can readily propagate approximate belief state dynamics \eqref{eq:belief_state_dyn} efficiently in closed-form, leveraging the conjugate prior property of Gaussian distributions.
Crucially, the above approximate belief state updates preserve the dual control effect.
To verify this claim, we examine the covariance of the updated conditional distribution $\distr(\theta^M_{-} \mid \ivec_{t+1}; M)$:
\begin{equation}
\label{eq:updated_covar}
    \covar^{\theta^M_{-}}_{t+1} =  \covar^{\theta^M}_{t} + \left[ \left( B^H U^M(\state_t, \ctrl^R_t) \right)^\top {\covar^{\bar{\dstb}^M_t}}^{-1} (\state_t, \ctrl^R_t) B^H U^M(\state_t, \ctrl^R_t)\right]^{-1},
\end{equation}
whose prior distribution is $\distr(\theta^M \mid \ivec_{t}; M) = \gaussian(\mean^{\theta^M}_{t}, \covar^{\theta^M}_{t})$. 
From the above we can clearly see that the robot's control $\ctrl^R_t$ affects the updated covariance matrix $\covar^{\theta^M_{-}}_{t+1}$, and all future covariance matrices implicitly by affecting future states $\state_k$ ($k>t+1$) via the Bellman recursion, hence producing dual control effect for $\theta^M$ according to Definition~\ref{def:DC}.
Dual control effect for $M$ can be similarly verified via~\eqref{eq:Bayes_est_meas_M}.
We denote the approximate belief state dynamics as ${\bel}_{t+1}=\tilde{g}({\bel}_{t}, \state_{t+1}, \ctrl^R_{t} )$.


\noindent \textbf{Implicit Dual Control using Scenario Trees.}
In this section, we propose an approximate solution method for the canonical human-in-the-loop planning problem \eqref{eq:HRI} using scenario-tree-based stochastic model predictive control (ST-SMPC)~\cite{bernardini2011stabilizing,mesbah2018stochastic}, which yields a control policy with dual control effect.
The key idea of ST-SMPC is to approximate the expectation in Bellman recursion \eqref{eq:sto_DP} based on uncertainty samples, i.e. quantized belief states.
This leads to a scenario tree that allows us to roll out~\eqref{eq:sto_DP} as a deterministic finite-horizon optimal control problem, which can be readily solved by gradient-based algorithms.
Since our approach hinges on directly approximating Bellman recursion~\eqref{eq:sto_DP}, it can be understood as an implicit dual control method~\cite{mesbah2018stochastic}.

\remove{A typical scenario tree is illustrated in Fig.~\ref{fig:stree}.
The scenario tree construction procedure is summarized in Alg.~\ref{alg:tree} in Appendix~\ref{apdx:tree}.}
We denote a \emph{node} in the scenario tree as $\node$, whose time, state and belief state are denoted as $t_n$, $\state_n$ and $\bel_\node$, respectively.
Similarly, the uncertainty samples of the node are $\theta^M_n$, $\bar{\dstb}^M_n$ and $M_n$.
\new{Here, recall that $\bar{\dstb}^M_n$ is the combined disturbance defined in~\eqref{eq:combined_dstb_covar}, which implicitly incorporates a sample of the human's action.}
The set of all nodes is defined as $\nodeset$.
We define the transition probability from a parent node $\pre{\node}$ to its child node $\node$ as $\bar{\prob}_{\node} := \prob(\theta^M_n \mid \ivec_{\pre{\node}}; M_\node) \prob(\bar{\dstb}^{M}_n \mid \ivec_{\pre{\node}}; M_\node) \prob\left(M_\node \mid \ivec_{\pre{\node}}\right).$
Subsequently, the \textit{path transition probability} of node $\node$, i.e. the transition probability from the root node $\node_0$ to node $\node$ can be computed recursively as $\prob_{\node} := \bar{\prob}_{\node} \cdot \bar{\prob}_{\pre{\node}} \cdots \bar{\prob}_{\node_0}$.
In order to quickly compute the conditional probabilities of $(\theta^M_n, \bar{\dstb}^{M}_n)$ and avoid online sampling (i.e. during the optimization), we use an offline sampling procedure, leveraging the fact that they are Gaussian random variables, similar to what is done in~\cite{bonzanini2020safe}.
We first generate samples offline from the standard Gaussian distribution. \remove{, i.e. $\theta^{M,o}_n \sim \gaussian(0,I)$ and $\bar{\dstb}^{M,o}_n \sim \gaussian(0,I)$.}
Then, during online optimization, these samples are transformed using the analytical mean and covariance:
\begin{subequations}
\begin{align}
    \label{eq:sample_trans_theta}
    \theta^M_n &= \mean^{\theta^M_{\node}}(\state_\node, \ctrl^R_\node) + \left(\covar^{\theta^M_{\node}}(\state_\node, \ctrl^R_\node)\right)^{1/2} \theta^{M,o}_n, \\
    \label{eq:sample_trans_dstb}
    \bar{\dstb}^{M}_n &=  \left(\tilde{\covar}^{\bar{\dstb}^M_\node} (\state_\node, \ctrl^R_\node)\right)^{1/2} \bar{\dstb}^{M,o}_n.
\end{align}
\end{subequations}
Unlike existing ST-SMPC methods for human-robot interaction, such as~\cite{schildbach2015scenario,hu2022sharp},
our scenario tree has both \emph{state- and input-dependent} uncertainty realizations (via transformations~\eqref{eq:sample_trans_theta} and~\eqref{eq:sample_trans_dstb}) and path transition probabilities (via Bayesian update of $M$ in~\eqref{eq:Bayes_est_meas_M}).
Therefore, the nodes can move in response to predicted states and inputs during online optimization.
Fundamentally, it is this feature that allows the robot to actively reduce the human-related uncertainty via the dual control effect, and capture mutual responses between the human and the robot.
\begin{remark}
\new{In order to alleviate the exponential growth of complexity associated with the scenario tree, only a small set of $M$, $\theta^{M,o}_n$ and $\bar{\dstb}^{M,o}_n$ are sampled.
A scenario pruning mechanism is introduced in~\cite{hu2022sharp} for static scenario trees, which can be used here as a heuristic to prune branches based on initial guesses.
Principled pruning for scenario trees involving state- and input-dependent samples, however, remains an open question.}
\end{remark}

\noindent \textbf{Exploitation Steps.}
In order to alleviate the computation challenge caused by the exponential growth of nodes in the scenario tree, we can stop branching the tree at a stage $N^d < N$, which we refer to as the dual control horizon. 
Subsequently, the remaining $N^e := N - N^d$ stages become the exploitation horizon, where each scenario is extended without branching
and the belief states are only propagated with the transition dynamics $g^t(\cdot)$ defined in~\eqref{eq:Bayes_est_time}, corresponding to a non-dual SMPC problem.
Thanks to the scenario tree structure, control inputs of the exploitation steps still preserve the causal feedback property, allowing the robot to be cautious and ``passively adaptive'' to future human's uncertainty realizations.

\noindent \textbf{Overall ST-SMPC Problem.}
\label{sec:SMPC}
Given a scenario tree defined by node sets $\nodeset_t$, we can roll out the Bellman recursion \eqref{eq:sto_DP} based on uncertainty samples in the tree, leading to a ST-SMPC problem formulated as
\begin{equation}
\label{eq:ST-SMPC}
\begin{aligned}
\min_{\substack{\mathbf{U}^R_t}} \ \ 
&\sum_{\tnode \in \nodeset_t \setminus \leafset_t} \prob_{\tnode} \ell^R (\state_\tnode, \ctrl^R_\tnode) + \sum_{\tnode \in \leafset_t} \prob_{\tnode} \ell^R_{F} (\state_\tnode, \bel_\tnode) \\
\text{s.t.} \ \  &\state_{\node_0}=\hat{\state}_{t}, \ \bel_{\node_0}=\hat{\bel}_{t}, \\
&\state_{\tnode} =  \dyn (\state_{\pre{\tnode}}) + B^R(\state_{\pre{\tnode}}) \ctrl^R_{\pre{\tnode}} + B^H(\state_{\pre{\tnode}}) \ctrl^H_{\tnode} + \bar{\dstb}^{M}_{\tnode}, &&\hspace{0.25cm} \forall \tnode \in \nodeset_t \setminus \{n_0\} \\
&{\bel}_{\tnode}=\tilde{g}\left({\bel}_{\pre{\tnode}}, \state_{\tnode}, \ctrl^R_{\pre{\tnode}}\right),  &&\hspace{0.25cm} \forall \tnode \in \nodeset_t^d \\
&{\bel}_{\tnode}=g^t\left({\bel}_{\pre{\tnode}}\right), &&\hspace{0.25cm} \forall \tnode \in \nodeset_t^e \\
&\textstyle\ctrl^H_{{\tnode}}=U^{M_{{\tnode}}}(\state_{\pre{\tnode}}, \ctrl^R_{\pre{\tnode}}) \theta^{M}_{{\tnode}},\;\ctrl^H_{{\tnode}} \in \cset^H,\;\eqref{eq:sample_trans_theta},\;\eqref{eq:sample_trans_dstb}, &&\hspace{0.25cm} \forall \tnode \in \nodeset_t \setminus \{n_0\} \\
&\ctrl^R_\tnode \in \cset^{R}, &&\hspace{0.25cm} \forall \tnode \in \nodeset_t \setminus \leafset_t\\
&\state_{\tilde{n}} \notin \failure, &&\hspace{0.25cm} \forall \tnode \in \nodeset_t\\
\end{aligned}
\end{equation}
where $\leafset_t$ is the set of all leaf nodes,
(i.e. ones that do not have a descendant),
$\nodeset^d_t$ and $\nodeset^e_t$ are the set of dual control and exploitation nodes, respectively, and set $\mathbf{U}^R_t := \{ \ctrl^R_\tnode \in \reals^{m_R}: \tnode \in \nodeset_t \setminus \leafset_t \}$ is the collection of robot's control inputs associated with all non-leaf nodes.
\new{Problem~\eqref{eq:ST-SMPC} is a nonconvex trajectory optimization problem, which can be solved using general-purpose nonconvex solvers such as SNOPT~\cite{gill2005snopt}.}
The optimal solution $\mathbf{U}^{R,*}_t$ to \eqref{eq:ST-SMPC} is implemented in a receding horizon fashion, i.e. $\pi^R_{\text{ID-SMPC}}(\hat{\state}_t, \hat{\bel}_t) := \ctrl_{\node_0}^{R,*}$.

\begin{theorem}
The policy $\pi^R_{\text{ID-SMPC}}(\cdot,\cdot)$ obtained by solving~\eqref{eq:ST-SMPC} has dual control effect.
\end{theorem}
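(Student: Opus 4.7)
My plan is to verify the two clauses of Definition~\ref{def:DC} directly on the first-stage action $\ctrl^R_{\node_0}:=\pi^R_{\text{ID-SMPC}}(\hat{\state}_t,\hat{\bel}_t)$ produced by~\eqref{eq:ST-SMPC}, leveraging the closed-form belief identities established just before the theorem statement. The unifying observation is that, by construction of the scenario tree, each child $\tnode\in\succ{\node_0}\subseteq\nodeset^d_t$ is reached with strictly positive path probability $\prob_{\tnode}=\bar{\prob}_{\tnode}>0$ and carries the belief $\bel_{\tnode}=\tilde g(\bel_{\node_0},\state_{\tnode},\ctrl^R_{\node_0})$; hence any dependence of the single-step Bayesian update on $\ctrl^R_{\node_0}$ immediately qualifies as a dual control effect in the sense of Definition~\ref{def:DC}.

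For the continuous parameter $\theta^M$, I would plug into~\eqref{eq:updated_covar}: at each child $\tnode$ with sampled mode $M_{\tnode}$, the updated covariance $\covar^{\theta^M_{-}}_{\tnode}$ depends on $\ctrl^R_{\node_0}$ through both $U^{M_{\tnode}}(\state_{\node_0},\ctrl^R_{\node_0})$ and $\covar^{\bar{\dstb}^{M_{\tnode}}}(\state_{\node_0},\ctrl^R_{\node_0})$ --- precisely the state- and input-dependent objects that the sample transformations~\eqref{eq:sample_trans_theta}--\eqref{eq:sample_trans_dstb} propagate through the tree. Thus $\ctrl^R_{\node_0}$ can modify the second central moment of $\theta^M$ with nonzero probability, satisfying clause~(a). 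For the categorical mode $M$, I would invoke~\eqref{eq:Bayes_est_meas_M}: the observation likelihood $\distr(\state_{\tnode}\mid\ctrl^R_{\node_0},\ivec_{\node_0};M)$ is a Gaussian evaluated at $\state_{\tnode}$ whose mean depends on $\ctrl^R_{\node_0}$ via~\eqref{eq:joint_sys_approx} and, crucially, varies across $M$ through the mode-specific Laplace mean $U^M$. Consequently the posterior $\distr(M_{-}\mid\ivec_{\tnode})$ --- and hence its entropy --- can be modulated by $\ctrl^R_{\node_0}$, satisfying clause~(b).

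The main obstacle I anticipate is establishing the genericity of these ``can affect'' statements: one must rule out degenerate configurations in which the partial derivatives with respect to $\ctrl^R_{\node_0}$ vanish identically (e.g., pathological $\qfunc^M_i$ under which $\ctrl^R$ drops out of both $U^M$ and the observed component of the next state). I would discharge this by appealing to the structural assumptions of the model, namely that the joint dynamics~\eqref{eq:joint_sys} are input-affine with a nontrivial $B^R$ term and that the basis-policy means $U^M(\state_t,\ctrl^R_t)$ inherit non-constant dependence on $\ctrl^R_t$ from the Q-functions of the Boltzmann model~\eqref{eq:Boltzmann}, as concretely instantiated in Examples~\ref{example:1}--\ref{example:2}. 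Under these mild non-degeneracy conditions, the two clauses above combine to yield the claimed dual control effect.
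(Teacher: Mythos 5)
Your proposal is correct and follows essentially the same route as the paper: clause~(a) is verified via the input-dependence of the updated covariance in~\eqref{eq:updated_covar}, and clause~(b) via the input-dependence of the likelihood in the mode update~\eqref{eq:Bayes_est_meas_M}. Your additional remarks on positive path probabilities and non-degeneracy of the $\ctrl^R$-dependence are careful elaborations of points the paper's two-line proof leaves implicit, but they do not change the argument.
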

\begin{proof}
From~\eqref{eq:updated_covar}, the robot's control $\ctrl^R_t$ can affect the covariance of the belief over~$\theta^M$.
Therefore, the policy $\pi^R_{\text{ID-SMPC}}$ has dual control effect for $\theta^M$ per Definition~\ref{def:DC}.
Similarly, from~\eqref{eq:Bayes_est_meas_M}, it can be seen that $\ctrl^R_t$ can affect all components of the categorical distribution over $M$, and thereby its entropy, implying dual control effect for $M$ per Definition~\ref{def:DC}.
\end{proof}

\noindent \textbf{Safety and Feasibility.}
Since the ST-SMPC problem~\eqref{eq:ST-SMPC} is formulated as a standard scenario program, safety assurances can be obtained based on existing safe control methods.
In the following, we briefly discuss three ways to enforce safety, i.e. the human-robot joint system in closed-loop with the policy of~\eqref{eq:ST-SMPC} satisfying $\state \notin \failure$.
\begin{figure}[!hbtp]
     \centering
     \includegraphics[width=1.0\columnwidth]{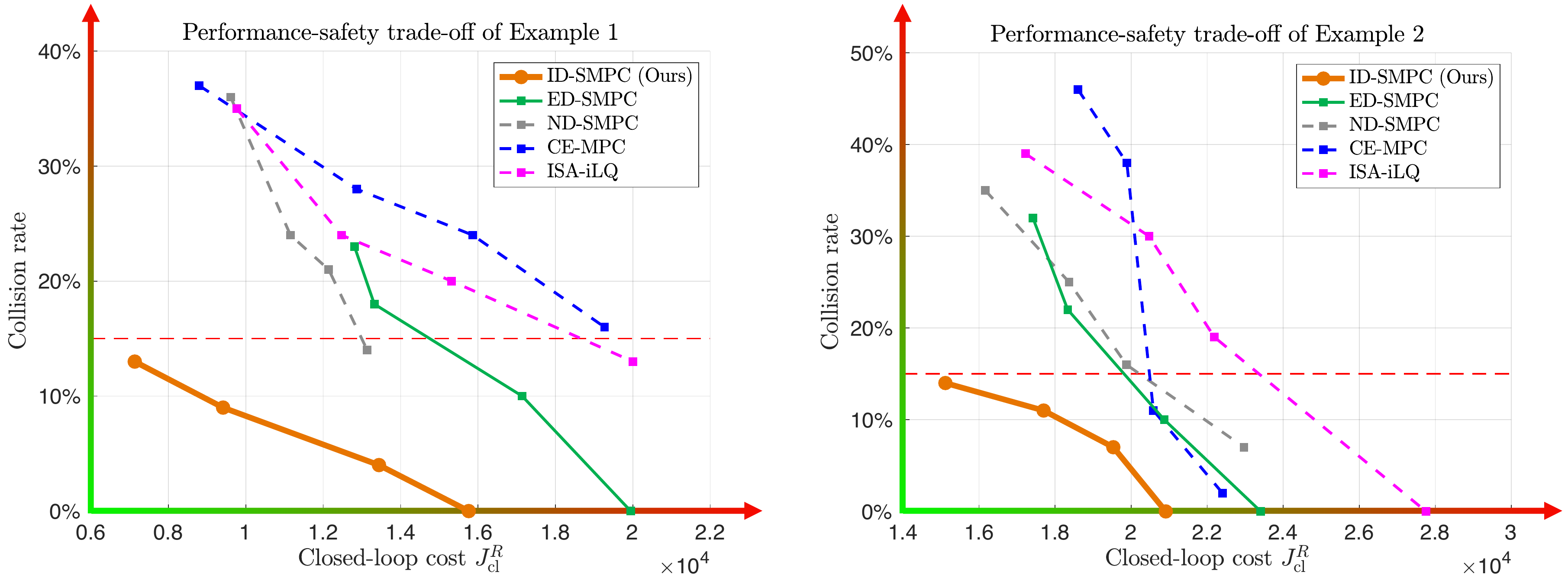}
     \caption{\label{fig:Ex1_tradeoff} Performance and safety trade-off of Examples 1 and 2. 
      Each data point is obtained based on a particular design of the planner and the statistical data of 100 trials with different random seeds. The collision rate of ID-SMPC is always below $15\%$ even for the least conservative design.
     }
\end{figure}
\begin{itemize}
    \item \emph{Robust.} Our prior work~\cite{hu2022sharp} provides robust safety and recursive feasibility guarantees for ST-SMPC based on shielding, a “least-restrictive” supervisory safety filter designed purely based on sets $\failure$, $\cset^H$, $\cset^R$, and is agnostic to the specific predictive model of human motion.
    As a special case,
    \cite{bonzanini2020safe} provides closed-loop safety guarantees by projecting the ST-SMPC policy onto the set of backup control policies associated with a robust controlled-invariant set.
    \new{Note that an additional assumption of bounded disturbance is needed in order to obtain robust safety guarantees.}
    
    \item \emph{Probabilistic.} We can replace~\eqref{eq:HRI:safety} with \emph{chance constraints} to obtain probabilistic safety guarantees, i.e. $\prob\left[\state \in \failure\right] \leq \beta$, where $\beta$ is the tolerance level, provided that $\state \in \failure$ can be written as a set of inequality constraints.
    An analytical bound of $\beta$ and probabilistic feasibility guarantee are established for chance-constrained ST-SMPC problems with dynamic obstacles in~\cite{de2021scenario}.
    
    \item \emph{Soft constrained.} Soft constraint is a simple, yet effective technique widely used by MPC to enforce safety of the closed-loop system.
    The advantages of soft constraints are that they are easy to implement, efficient to optimize, and feasibility can be guaranteed.
    In this paper, we tailor the soft constrained MPC approach in~\cite{zeilinger2014soft} for ST-SMPC, which relaxes the original hard constraints $\state_\tnode \notin \failure$ in~\eqref{eq:ST-SMPC} with slack variables for each node $\tnode \in \nodeset_t$.
    Although satisfaction of $\state \notin \failure$ can no longer be guaranteed for the closed-loop system, we show in Section~\ref{sec:sim} that our approach significantly reduces the collision rate comparing to the baselines (using the same soft constrained MPC approach) due to active reduction of the human's uncertainty.
\end{itemize}

\section{Simulation Results}
\label{sec:sim}

We evaluate our proposed implicit dual scenario-tree-based SMPC (ID-SMPC) planner on simulated driving scenarios, where the human-driven vehicles are simulated using a game-theoretic model synthesized with~\cite{fridovich2020efficient}.
The open-source code is available online.\footnote{\url{https://github.com/SafeRoboticsLab/Dual_Control_HRI}}
\begin{figure}[!hbtp]
  \centering
  \includegraphics[width=0.91\columnwidth]{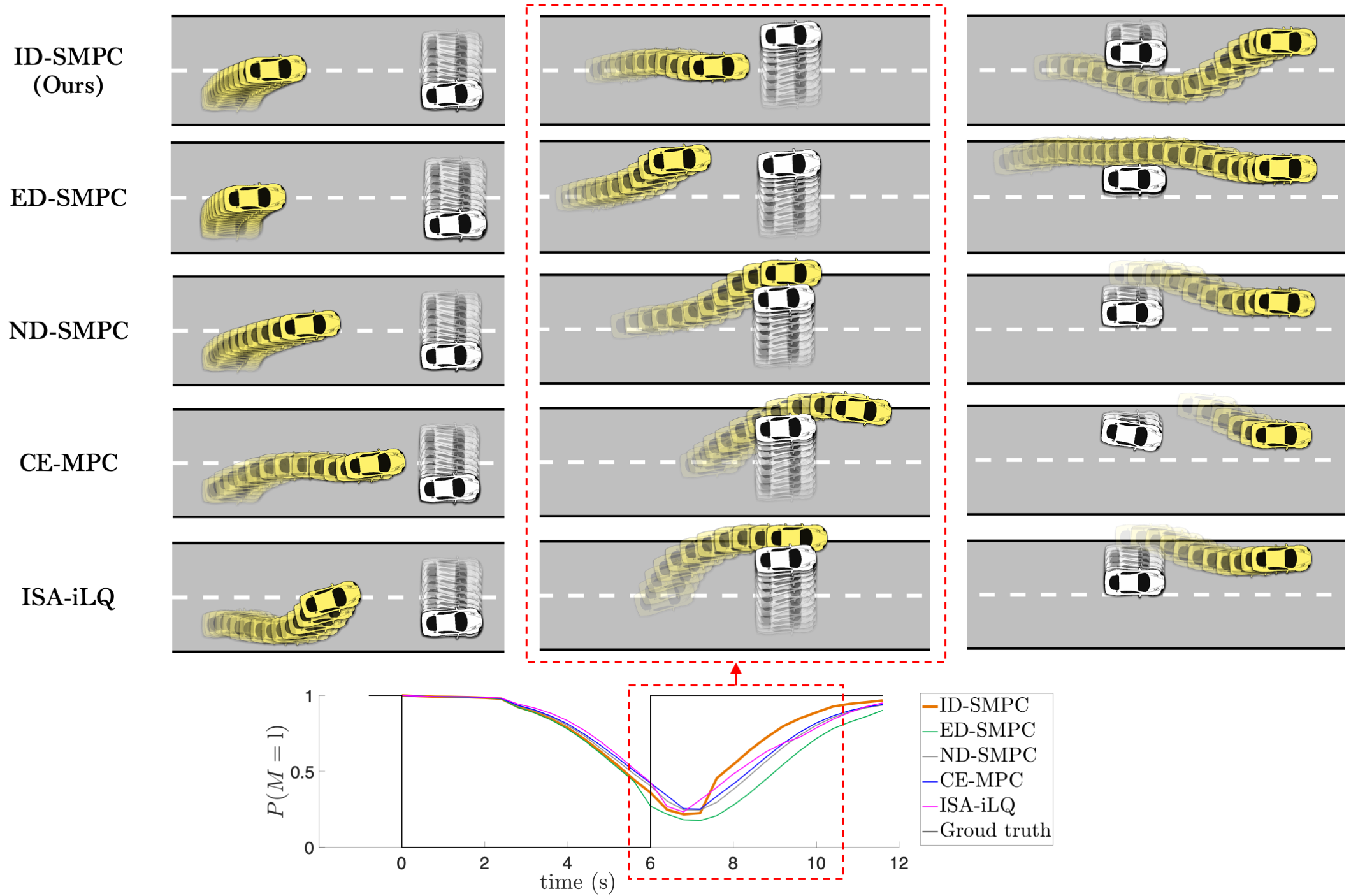}
  \caption{\label{fig:Ex1_Trajs} Simulation snapshots of Example 1. Longitudinal positions are shown in relative coordinates with $p_x^H = 0$. The left, middle, and right columns display trajectories for $t=[0,3]$ s, $t=[3,5]$ s, and the remainder of the trajectories, respectively.
  The bottom figure shows probability $\prob(M=\text{l})$ for all four planners overtime.
  Our proposed ID-SMPC planner yielded a clean and sharp overtaking maneuver of the robot while the non-dual planners led to unsafe trajectories.
  }
\end{figure}

\noindent \textbf{Baselines.}
We compare our proposed ID-SMPC planner against four baselines:
\begin{itemize}
    \item Explicit dual SMPC (ED-SMPC) planner, which augments the stage cost $\ell^R$ in \eqref{eq:HRI:obj} with an information gain term $\lambda (\entropy(\bel_{k})-\entropy(\bel_{k+1}))$ proposed by~\cite{sadigh2018planning}.
    Here, $\lambda > 0$ is a fine-tuned weighting factor.
    
    \item Non-dual scenario-tree-based SMPC (ND-SMPC) planner, which is based on solving~\eqref{eq:HRI} with a scenario tree that does not propagate belief states with an observation model (so the resulting policy does not have dual control effect).
    A similar scenario program is also considered in~\cite{bernardini2011stabilizing,schildbach2015scenario,hu2022sharp}.
    
    \item Certainty-equivalent MPC (CE-MPC) planner, which is based on solving~\eqref{eq:HRI} with the certainty-equivalence principle~\cite{mesbah2018stochastic,arcari2020dual}.
    
    \item Iterative linear-quadratic game with inference-based strategy alignment (ISA-iLQ) planner, proposed by~\cite{peters2020inference} and originally developed in~\cite{fridovich2020efficient}.
    An input projector is used to enforce $\ctrl^H \in \cset^H$ and $\ctrl^R \in \cset^R$.
\end{itemize}
ED-SMPC is a dual control planner while the other three do not generate dual control effect.
All planners use the same quadratic cost functions $\ell^R$ and $\ell_F^R$.

\noindent \textbf{Simulation Setup.} 
All planners are equipped with the same human intent inference scheme.
Vehicle and pedestrian dynamics are described by the kinematic bicycle model in~\cite{fisac2019hierarchical} and the unicycle model in~\cite{fridovich2020efficient}, respectively, both discretized with a time step of $\Delta t = 0.2$ s.
The interactive human agents are simulated using the iterative linear-quadratic game method~\cite{fridovich2020efficient} with a game horizon of $3$~s.
All simulations are performed using MATLAB and YALMIP~\cite{lofberg2004yalmip} on a desktop with an Intel Core i7-10700K CPU.
All nonlinear MPC problems are solved with SNOPT~\cite{gill2005snopt}.
\begin{figure}[!hbtp]
  \centering
  \includegraphics[width=1.0\columnwidth]{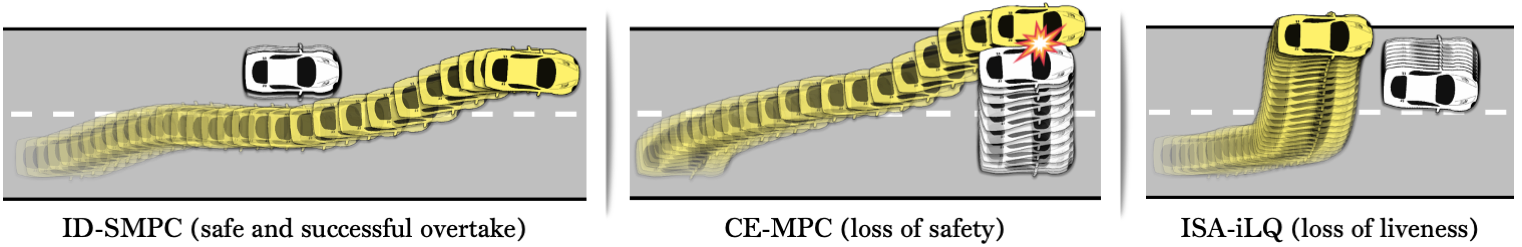}
  \caption{\label{fig:Ex1_safety_liveness} One trial of Example 1, where the robot successfully overtook the human-driven vehicle in 6 s using ID-SMPC.
  CE-MPC and ISA-iLQ led to loss of safety and liveness, respectively.
  }
\end{figure}
The average solving time for ISA-iLQ, CE-MPC, ND-SMPC, ED-SMPC, and ID-SMPC \new{at each time step} are
1.92~s, 0.333~s, 0.184~s, 0.481~s, and 0.478~s,
respectively, when the robot is interacting with one human agent (Section~\ref{sec:Ex1}~and~\ref{sec:Ex2}), and
19.3~s, 0.775~s, 1.37~s, 4.17~s, and 4.03~s,
respectively, in the presence of three human agents (Section~\ref{sec:Ex3}).
The increase in solving time (for ST-SMPC problems) is due to the exponential growth of scenarios when the number of agents increases.
This highlights an opportunity to
improve \remove{the }time efficiency \remove{of our algorithm }in multi-agent settings leveraging sparse scenario tree methods~\cite{hu2022sharp} and distributed MPC techniques.

\noindent \textbf{Metrics.}
To measure the planning performance, we consider the following two metrics:
\begin{itemize}
    \item Closed-loop cost, defined as $J^R_\cl := \sum_{t=0}^{T_\Sim} \ell^R(\state_t, \ctrl_t^R)$, where $T_\Sim$ is the simulation horizon, and $\state_{[0:T_\Sim]}, \ctrl_{[0:T_\Sim]}$ are the \emph{executed} trajectories (with replanning).
    \item Collision rate, defined as $N_{\text{coll}} / N_{\text{trial}} \times 100\%$, where $N_{\text{coll}}$ is the number of trials that a collision happens, i.e. $x_t \in \failure$,  and $N_{\text{trial}}$ is the total number of trials.
\end{itemize}

\noindent \textbf{Hypotheses.} 
We make three hypotheses, which are supported by our simulation.
\begin{itemize}
    \item \textbf{H1 (Performance and Safety Trade-off).} \emph{Dual-control planners result in a better performance-safety trade-off than non-dual baselines.}
    
    \item \textbf{H2 (Safety and Liveness).} \emph{Insufficient knowledge about human hidden states (due to lack of dual control effect) can cause loss of safety (e.g. collisions) and/or liveness (e.g. failing to overtake the human).}
    
    \item \textbf{H3 (Implicit vs Explicit Dual Control).} \emph{Explicit dual control is less efficient than its implicit counterpart, even with fine tuning.}
\end{itemize}

\subsection{Objective and Awareness 
Uncertainty (Example 1)}
\label{sec:Ex1}

The performance-safety trade-off curve for Example 1 plotted in Fig.~\ref{fig:Ex1_tradeoff} validates H1.
Here, we design each planner with a set of fine-tuned parameters, e.g. weight of the collision avoidance cost and acceleration limits.
For a given planner design, we simulate the scenario 100 times, each with a different random seed, which affects three uncertainty sources: initial conditions, trajectories of the hidden states, and additive disturbances.
Although the ED-SMPC policy also manages to achieve a low collision rate thanks to its ability to actively reduce the human uncertainty, its overall closed-loop performance is consistently inferior to that of ID-SMPC, which validates H3.

\begin{figure}[!hbtp]
  \centering
  \includegraphics[width=1.0\columnwidth]{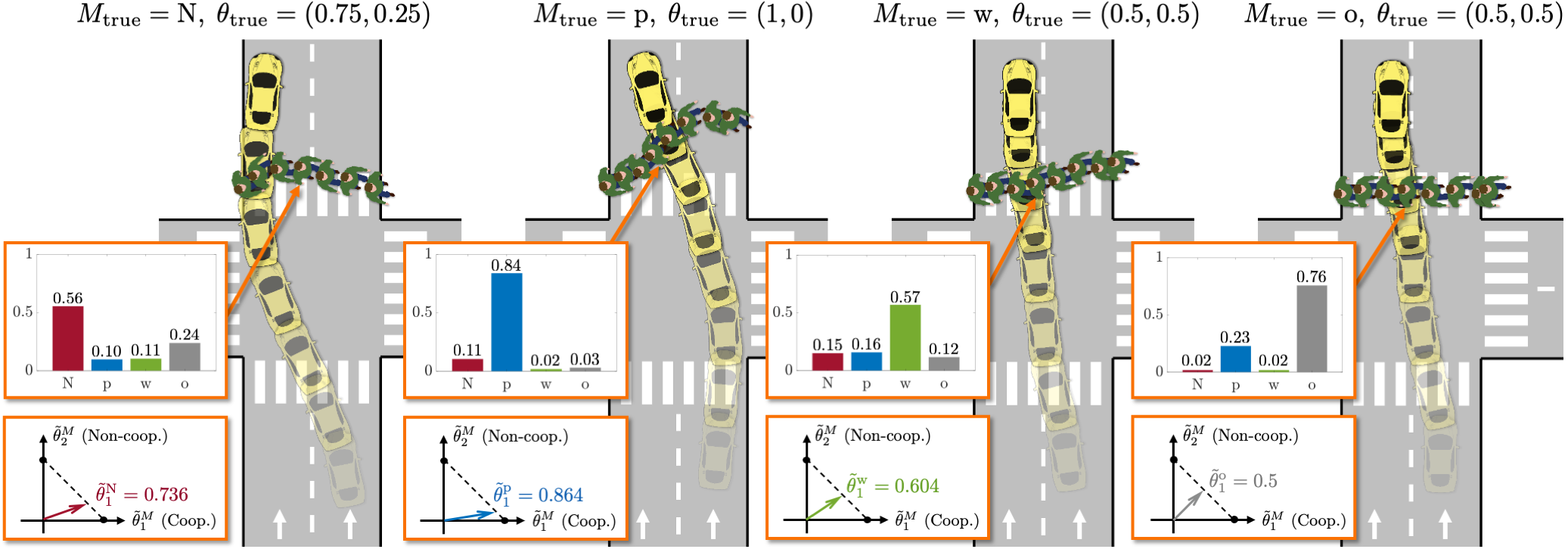}
  \caption{\label{fig:Ex2} Simulation snapshots of Example 2 using the proposed ID-SMPC planner. 
  The ground-truth hidden states are fixed throughout the simulation.
  In the orange blocks we display the robot's running belief $\distr(M \mid \ivec_{t})$ and maximum \emph{a posteriori} mean of $\theta^M$ with $t = 2.6$ s.
  The robot was able to quickly identify the human's hidden states and planned a collision-free trajectory in all four trials, accounting for the anticipated uncertainty reduction and interactions with the human.
  }
\end{figure}

Trajectory snapshots and evolution of $\prob(M=\text{l})$ of one simulation trial are shown in Fig.~\ref{fig:Ex1_Trajs}.
The priors are chosen as $P(M=\text{l})=1$ and $\theta^{\text{l}},\theta^{\text{r}} \sim \gaussian((0.5,0.5),5I)$.
Unlike non-dual control planners, ID-SMPC controlled the robot to approach the human-driven vehicle along the center of the road, allowing the robot to \new{informatively} probe the human---which resulted in a more accurate prediction of $M$ (bottom)---and guiding the robot through a region from which collisions can be avoided more easily.
Indeed, as the human's hidden state $M$ switched from $\text{r}$ to $\text{l}$ at $t = 6$~s, the robot using ID-SMPC executed a sharp right turn and successfully avoided colliding with the human.
The ED-SMPC planner, although effective at reducing the uncertainty at the beginning, failed to recognize that overtaking the human from the right would have resulted a more efficient trajectory.
All non-dual control planners, \emph{even with replanning}, caused a collision with the human due to insufficient knowledge about $M$.
It is also worth noticing that even if ID-SMPC uses ND-SMPC solutions for initialization, their closed-loop behaviors are vastly different, which essentially comes from the dual control effect.

In Fig.~\ref{fig:Ex1_safety_liveness}, we examine another simulation trial of Example 1.
Ground truth values of the hidden states are $\theta=1$ and $M=\text{l}$, respectively.
Using ID-SMPC, the robot was able to safely overtake the human-driven car in $6$ s.
However, using the ISA-iLQ planner, the robot failed to overtake the human within $10$ s, which is the simulation horizon.
Due to lack of dual control effort, the robot was stuck behind the human, \emph{unaware of the human's willingness to make room for the robot}.
Fig.~\ref{fig:Ex1_safety_liveness} also shows an unsafe trajectory generated with the CE-MPC planner.
Results shown by Fig.~\ref{fig:Ex1_Trajs} and Fig.~\ref{fig:Ex1_safety_liveness} demonstrate that with dual control effort, the robot gains better safety and liveness when interacting with the human, which supports our hypothesis H2. 
\begin{figure}[!hbtp]
  \centering
  \includegraphics[width=0.98\columnwidth]{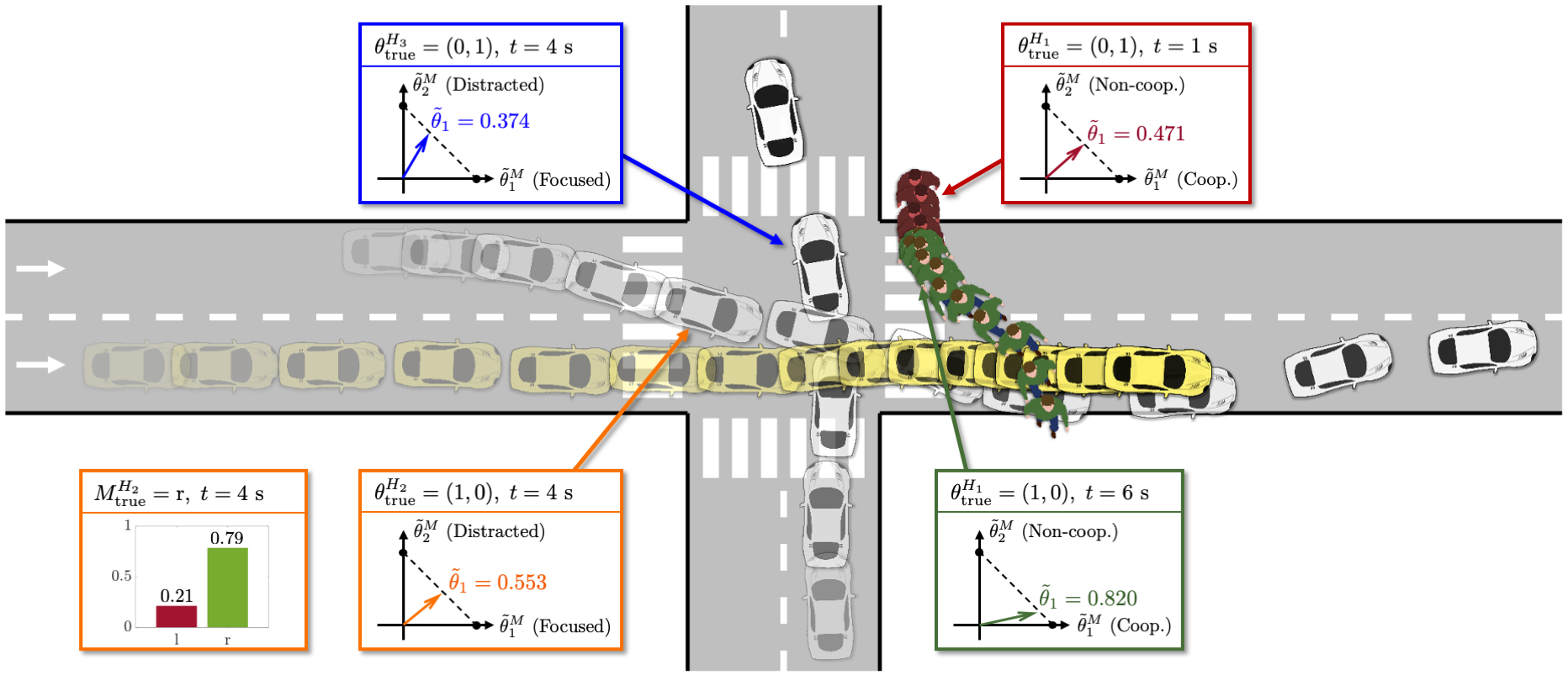}
  \caption{\label{fig:Ex3} Simulation snapshots and estimated hidden states of a multi-agent interaction scenario with a pedestrian ($H_1$) and two human-driven vehicles ($H_2$ and $H_3$) using the ID-SMPC planner.
  }
\end{figure}

\subsection{Behavioral and Cooperative Uncertainty (Example 2)}
\label{sec:Ex2}

Next, we consider an uncontrolled traffic intersection scenario with the human uncertainty introduced in Example 2.
Trajectory snapshots of four simulation trials with different hidden states are shown in Fig.~\ref{fig:Ex2}.
We chose uninformative prior distributions $\distr(M \mid \ivec_0)=\begin{bmatrix} 0.25 & 0.25 & 0.25 & 0.25 \end{bmatrix}$ and $\theta^{M} \sim \gaussian((0.5,0.5),5I)$ for all $M \in \mset$.

\subsection{Multi-Agent Case Study}
\label{sec:Ex3}

Finally, we apply ID-SMPC to the same intersection scenario as in Example 2 involving three human agents: two human-driven vehicles and a pedestrian.
The uncertainty for the human-driven vehicles and the pedestrian is modeled with Example 1 and 2, respectively.
Trajectory snapshots of one representative trial is shown in Fig.~\ref{fig:Ex3}.

\section{Discussion and Future Work}
\looseness=-1
We have introduced an implicit dual control approach towards active uncertainty reduction for human-robot interaction.
The resulting policy improves safety and efficiency of human-in-the-loop motion planning via a tractable approximation to the Bellman recursion
of a dual control problem,
automatically achieving an efficient
balance between improving expected performance and
eliciting information on future human behavior.
We see our work as an important step towards a broader class of methods that can handle general parametrizations of human behavior, including the effect of robot decisions on the human's hidden state.
While this paper focuses on the robot's own performance, our approach
may be adapted
to account for social coordination and altruism~\cite{toghi2021cooperative} in cooperative human-robot settings.
Finally, 
there is an open opportunity to derive
formal 
probabilistic or worst-case safety guarantees based on results established in~\cite{hu2022sharp,de2021scenario}.

\vspace{-0.2cm}
%
%
\printbibliography{}


\appendix

\section{Recursive Bayesian Estimation Updates}
\label{apdx:Bayes_est}

In this section, we write $B^R := B^R(\state_t)$, $B^H := B^H(\state_t)$, $U^M(\state_t, \ctrl^R_t) := U^M$ and $\covar^{\bar{\dstb}^M_t}(\state_t, \ctrl^R_t) := \covar^{\bar{\dstb}^M_t}$ for notational convenience.
Based on \eqref{eq:joint_sys_approx}, we can compute the likelihood of state $\state_{t+1}$ as
\begin{equation}
\label{eq:state_likelihood}
    \prob(\state_{t+1} \mid \ctrl^R_{t}, \ivec_{t}; \theta^M, M) = \textstyle\gaussian\left( B^H U^M \theta^M + \dyn (\state_t) + B^R \ctrl^R_t, \covar_{\bar{\dstb}^M_t}(\state_t, \ctrl^R_t)\right)
\end{equation}
Recall that we can decompose the belief state as $\bel_t = \prob\left(\theta^M \mid \ivec_{t}; M\right) \prob\left(M \mid \ivec_{t}\right)$.
Suppose the prior $\prob\left(\theta^M | \ivec_{t}; M\right) \sim \gaussian\left(\mean^{\theta^M}_{t},\covar^{\theta^M}_{t}\right)$ is given as a Gaussian distribution.
The conditional distribution $\distr(\theta^M_{-} \mid \ivec_{t+1}; M)$ can be computed by measurement update~\eqref{eq:Bayes_est_meas_theta} in closed-form and remains a Gaussian, whose mean and covariance are given by
\begin{equation*}
\begin{aligned}
    \mean^{\theta^M_{-}}_{t+1} &= \covar^{\theta^M_{-}}_{t+1} \left[ \left( B^H U^M \right)^\top {\covar^{\bar{\dstb}^M_t}}^{-1}\left( x_{t+1} - \dyn (\state_t) - B^R \ctrl^R_t \right) + \left(\covar^{\theta^M}_{t}\right)^{-1} \mean^{\theta^M}_{t} \right],\\
    \covar^{\theta^M_{-}}_{t+1} &= \left[ \left(\covar^{\theta^M}_{t}\right)^{-1} + \left( B^H U^M(\state_t, \ctrl^R_t) \right)^\top {\covar^{\bar{\dstb}^M_t}}^{-1} (\state_t, \ctrl^R_t) B^H U^M(\state_t, \ctrl^R_t)\right]^{-1}.
\end{aligned}
\end{equation*}
Subsequently, measurement update \eqref{eq:Bayes_est_meas_M} for $M$ can be readily computed by marginalizing state likelihood \eqref{eq:state_likelihood} with respect to $\theta^M$, similar to~\cite{arcari2020dual}.

\section{Scenario Tree Construction Procedure}
\label{apdx:tree}
\begin{algorithm}[H]
	\caption{Offline scenario tree construction}
	\label{alg:tree}
	\begin{algorithmic}[1]
	\Require Current state $\hat{x}_t$ and belief state $\hat{b}_t$, horizon $N > 0$, dual control horizon $1 \leq N^d \leq N$, mode set $\mset$, branching number $K > 0$
	\Ensure A scenario tree defined by node set $\nodeset_t$
	\State Initialization: $\state_{n_0} \gets \hat{\state}_t$,\;$\bel_{n_0} \gets \hat{b}_t$,\;$t_{n_0} \gets 0$,\;$\nodeset_t \gets \{n_0\}$
	\LineComment{Dual Control Steps:}
	\ForAll{$t^\prime \gets 0, 1, \ldots, N^d$}
        \ForAll{$\tnode \gets \nodeset_t$}
	        \If{$\tnode.t = t^\prime$}
	            \State Branch out child nodes: $\nodeset_t \gets \nodeset_t \cup \textsc{Branch}(\tilde{n},\mset,K)$
            \EndIf
	    \EndFor
    \EndFor
    \LineComment{Exploitation Steps:}
    \ForAll{$t^\prime \gets N^d, N^d+1, \ldots, N$}
        \ForAll{$\tnode \gets \nodeset_t$}
	        \If{$\tnode.t = t^\prime$}
	            \State Extend to get one child node: $\nodeset_t \gets \nodeset_t \cup \textsc{Extend}(\tilde{n})$
            \EndIf
	    \EndFor
    \EndFor
	\end{algorithmic}
\end{algorithm}

\begin{algorithm}[H]
	\caption{Generate child nodes for dual control steps}
	\label{alg:branch}
	\begin{algorithmic}[1]
	\Function \textsc{Branch}$(n, \mset, K)$
	\State Initialize the set of child nodes: $\cnodeset \gets \emptyset$
	\ForAll{$M \gets \mset$}
	    \State Randomly sample a set $\{\theta^{M,o}_1,\theta^{M,o}_2,\ldots,\theta^{M,o}_K\}$ from the standard Gaussian $\gaussian(0,I)$
	    \State Randomly sample a set $\{\bar{\dstb}^{M,o}_1,\bar{\dstb}^{M,o}_2,\ldots,\bar{\dstb}^{M,o}_K\}$ from the standard Gaussian $\gaussian(0,I)$
	    \ForAll{$k \gets 1,2,\ldots,K$}
	        \State Create a node $\tnode$
	        \State $t_\tnode = t_\node + 1$,\;$\theta^{M,o}_\tnode = \theta^{M,o}_k$,\;$\bar{\dstb}^{M,o}_\tnode = \bar{\dstb}^{M,o}_k$,\;$M_\tnode = M$
	        \State $\cnodeset \gets \cnodeset \cup \{\tnode\}$
	    \EndFor
	\EndFor
	\State \Return $\cnodeset$
	\end{algorithmic}
\end{algorithm}

\begin{algorithm}[H]
	\caption{Generate a child node for exploitation steps}
	\label{alg:extend}
	\begin{algorithmic}[1]
	\Function \textsc{Extend}$(n)$
	\State Create a node $\tnode$
	\State $t_\tnode = t_\node + 1$,\;$\theta^{M,o}_\tnode = 0$,\;$\bar{\dstb}^{M,o}_\tnode = 0$,\;$M_\tnode = M_\node$
	\State \Return $\{\tnode\}$
	\end{algorithmic}
\end{algorithm}

\section{Practical Aspects}
\label{sec:practical}

\subsection{Computation of the Human's Rational Actions}
The Laplace approximation used by~\eqref{eq:Laplace} requires the mean function (human's rational action) $\mu_i^M(\state_t, \ctrl^R_t)$ as the maximizer of $\qfunc_{i}^{M}\left(\state_{t}, \ctrl^R_t, \ctrl_{i}^{M}\right)$.
In our paper, we use the game-theoretic approach~\cite{fridovich2020efficient} to compute $\qfunc_{i}^{M}(\cdot)$ as an approximate local Nash equilibrium solution, which adopts an analytical maximizer.
In case when the expression of $\mu_i^M(\cdot)$ cannot be computed beforehand, we use a numerical approach similar to~\cite{sadigh2018planning}, which computes a \emph{local} maximizer $\mu_i^M(\cdot)$ during online optimization.
Under the mild assumption that $\qfunc_{i}^{M}(\cdot)$ is a smooth function whose maximum can be attained, we can set the gradient of $\qfunc_{i}^{M}(\cdot)$ with respect to $\ctrl_{i}^{M}$ to $0$.
This condition can be enforced either as a hard constraint or a penalty cost in ST-SMPC problem~\eqref{eq:ST-SMPC}.

\subsection{Handling Unbounded Support of Predicted Human's Action}
\looseness=-1
Constraint $\ctrl^H_{{\tnode}} \in \cset^H$ in~\eqref{eq:ST-SMPC} may not be feasible, since the predicted human's action $\ctrl^H_{{\tnode}}$
is given as a weighted sum of (unbounded) basis functions with (unbounded) normally distributed weights.
To reconcile this, we define, for each node ${\tnode}$, two separate decision variables: $\tilde{\ctrl}^H_{{\tnode}}$, which must equal the sampled linear combination of basis functions, and $\ctrl^H_{{\tnode}}$, which must satisfy $\ctrl^H_{{\tnode}} \in \cset^H$.
By adding a cost term $C\|\tilde{\ctrl}^H_{{\tnode}}-\ctrl^H_{{\tnode}}\|_2$ to~\eqref{eq:ST-SMPC}, with some large $C>0$ (we use $C=10^8$), the solver sets ${\ctrl}^H_{{\tnode}}$ to the nearest point in $\cset^H$ to the sample-consistent $\tilde{\ctrl}^H_{{\tnode}}$.
This feasible ``projected'' control ${\ctrl}^H_{{\tnode}}$ enters the dynamics in~\eqref{eq:ST-SMPC}.

\subsection{Parameter Estimation in Belief State Update}
Recall that in order to derive the approximate belief state dynamics $\tilde{g}(\cdot)$, we need to estimate the value of $\theta^M$ in~\eqref{eq:combined_dstb_covar}.
In our paper, we use a roll-out-based approach by setting the estimation $\bar{\theta}^M$ to the mean of the conditional distribution of $\theta^M$ computed in Step 3 of the initialization pipeline, which is described in the next section.

\subsection{Initialization Pipeline}
Since problem~\eqref{eq:ST-SMPC} is in general a large-scale nonconvex optimization problem, initialization is crucial for solving it rapidly and reliably in real-time.
In this paper, we generate an initial guess for~\eqref{eq:ST-SMPC} using the following pipeline:
\begin{enumerate}
    \item (Optional) Solve a certainty-equivalent MPC by setting $\theta^M$ and $M$ to their maximum a posteriori estimated values based on the current belief state $\hat{b}_t$.
    \item Solve a non-dual SMPC with the same scenario tree structure as the dual-SMPC, replacing belief state dynamics $\tilde{g}(\cdot)$ with $g^t(\cdot)$ for all dual control steps, and using the certainty-equivalent MPC solution as the initial guess.
    \item Forward-propagate belief states through $\tilde{g}(\cdot)$ using the non-dual SMPC solution for all dual control nodes in the scenario tree.
\end{enumerate}
Step 1 is optional and is only needed when the non-dual SMPC in Step 2 cannot be readily solved.
In Section~\ref{sec:sim}, we show that even if~\eqref{eq:ST-SMPC} uses results of the non-dual SMPC as its initialization, the resulting closed-loop trajectories are significantly different.

\end{document}